\documentclass{article}




    \usepackage[final,nonatbib]{neurips_2023}


\usepackage[utf8]{inputenc} 
\usepackage[T1]{fontenc}    
\usepackage{hyperref}       
\usepackage{url}            
\usepackage{booktabs}       
\usepackage{amsfonts}       
\usepackage{nicefrac}       
\usepackage{microtype}      
\usepackage{xcolor}         

\hypersetup{
    colorlinks = true,
    citecolor = green}

\usepackage{amsmath}
\usepackage{amssymb}
\usepackage{mathtools}
\usepackage{amsthm}
\usepackage{enumitem}

\usepackage{multirow}
\usepackage{subfigure}
\usepackage{wrapfig}
\usepackage{graphicx}

\theoremstyle{plain}
\newtheorem{theorem}{Theorem}[section]
\newtheorem{proposition}[theorem]{Proposition}
\newtheorem{lemma}[theorem]{Lemma}
\newtheorem{fact}[theorem]{Fact}

\theoremstyle{definition}
\newtheorem{definition}[theorem]{Definition}

\newcommand{\bx}{\boldsymbol{x}}
\newcommand{\by}{\boldsymbol{y}}
\newcommand{\bomega}{\boldsymbol{\omega}}

\newcommand{\bz}{\boldsymbol{z}}
\newcommand{\bnu}{\boldsymbol{\nu}}
\newcommand{\bA}{\boldsymbol{A}}
\newcommand{\bB}{\boldsymbol{B}}
\newcommand{\bp}{\boldsymbol{p}}
\newcommand{\bq}{\boldsymbol{q}}

\title{Horospherical Decision Boundaries for Large Margin Classification in Hyperbolic Space}

%

\author{%
  Xiran Fan \\
  Department of Statistics \\
  University of Florida\\
  \texttt{fanxiran@ufl.edu} \\
  \And
  Chun-Hao Yang \\
  Institute of Statistics and Data Science\\ National Taiwan University \\
  \texttt{chunhaoy@ntu.edu.tw} \\
  \AND
  Baba C. Vemuri \\
  Department of CISE\\
  University of Florida\\
  \texttt{vemuri@ufl.edu} \\
}

\begin{document}

\maketitle

\begin{abstract}
Hyperbolic spaces have been quite popular in the recent past for representing hierarchically organized data. Further, several classification algorithms for data in these spaces have been proposed in the literature. These algorithms mainly use either hyperplanes or geodesics for decision boundaries in a large margin classifiers setting leading to a non-convex optimization problem. In this paper, we propose a novel large margin classifier based on horospherical decision boundaries that leads to a geodesically convex optimization problem that can be optimized using any Riemannian gradient descent technique guaranteeing a globally optimal solution. We present several experiments depicting the competitive performance of our classifier in comparison to SOTA.
\end{abstract}


\section{Introduction}
\label{sec:intro}

Hyperbolic space, a non-Euclidean space with constant negative curvature, has been shown  \cite{sarkar2011low,nickel2017poincare,tifrea2018poincare,sala2018representation} to be 
effective for representing hierarchically organized data.
For example, authors in \cite{sarkar2011low} showed that a tree can be embedded in a hyperbolic space with arbitrarily small distortion. The main reason for this is that a hyperbolic space can be regarded as a continuous version of trees -- the volume of the space grows \emph{exponentially} as one moves away from the center in hyperbolic space. This matches the growth pattern of the number of nodes in a tree which grows \emph{exponentially} as the depth of the tree increases. Hyperbolic space embedding has been shown to be a promising approach for representing data with a (latent) hierarchical structure \cite{nickel2017poincare,sala2018representation,tifrea2018poincare,ganea2018hyperbolic}. 

Recently, representation of data in hyperbolic space for the fundamental tasks of unsupervised and supervised learning has been popularized in various contexts, e.g., dimensionality reduction \cite{chami2019hyperbolic,fan2022nested}, clustering \cite{monath2019gradient}, large-margin classifier \cite{cho2019large,weber2020robust,chien2021highly}, regression \cite{marconi2020hyperbolic}, etc.
Existing 'linear' classifiers in hyperbolic spaces are predominantly based on \emph{geodesics} i.e., using geodesics as decision boundaries. In \cite{cho2019large}, the decision boundary is chosen to be the intersection of the hyperboloid model and a hyperplane in the ambient space, which in this case is the Minkowski space. Then, the support vector machine (SVM) in hyperbolic space is formulated as a nonconvex optimization problem. In \cite{weber2020robust}, authors followed the same parameterization of the hyperbolic geodesic decision plane and provided a series of algorithms to provably learn large margin classifiers in hyperbolic space. However, as pointed out by \cite{chien2021highly}, the algorithm in \cite{weber2020robust} fails to converge in practice.  Authors in \cite{chien2021highly} used the Poincar\'{e} ball model and parameterized the geodesic decision plane as a hyperplane mapped using the exponential map from the  \emph{tangent space} at some reference point. They first constructed convex hulls for each data cluster in the hyperbolic space and the reference point is then chosen to be the midpoint between different convex hulls. Then they apply a \emph{Euclidean} perceptron/SVM algorithm to data lifted in to the tangent space at the aforementioned reference point.  Although the optimization problem in the tangent space is convex, the procedure of the tangent space approximation introduces inaccuracies and distortions. Moreover, convex hull learning is highly unstable and their implementation is only applicable to the 2-dimensional hyperbolic space.


Finally, it is worth mentioning that linear classification within hyperbolic space, which can be considered as the last layer, referred to as the hyperbolic logistic regression (LR), is a fundamental component of hyperbolic neural networks (HNNs) \cite{ganea2019hyperbolic,shimizu2020hyperbolic}. The calculation of logits in this layer is based on the distances between samples and the geodesic decision boundary. Notably, this hyperbolic LR employs a geodesic decision boundary but is not a large-margin classifier.


\begin{wrapfigure}{r}{0.33\columnwidth}
\centering
\vspace{-5pt}
\includegraphics[width=0.3\columnwidth,trim={0cm 0cm 2cm 0cm},clip]{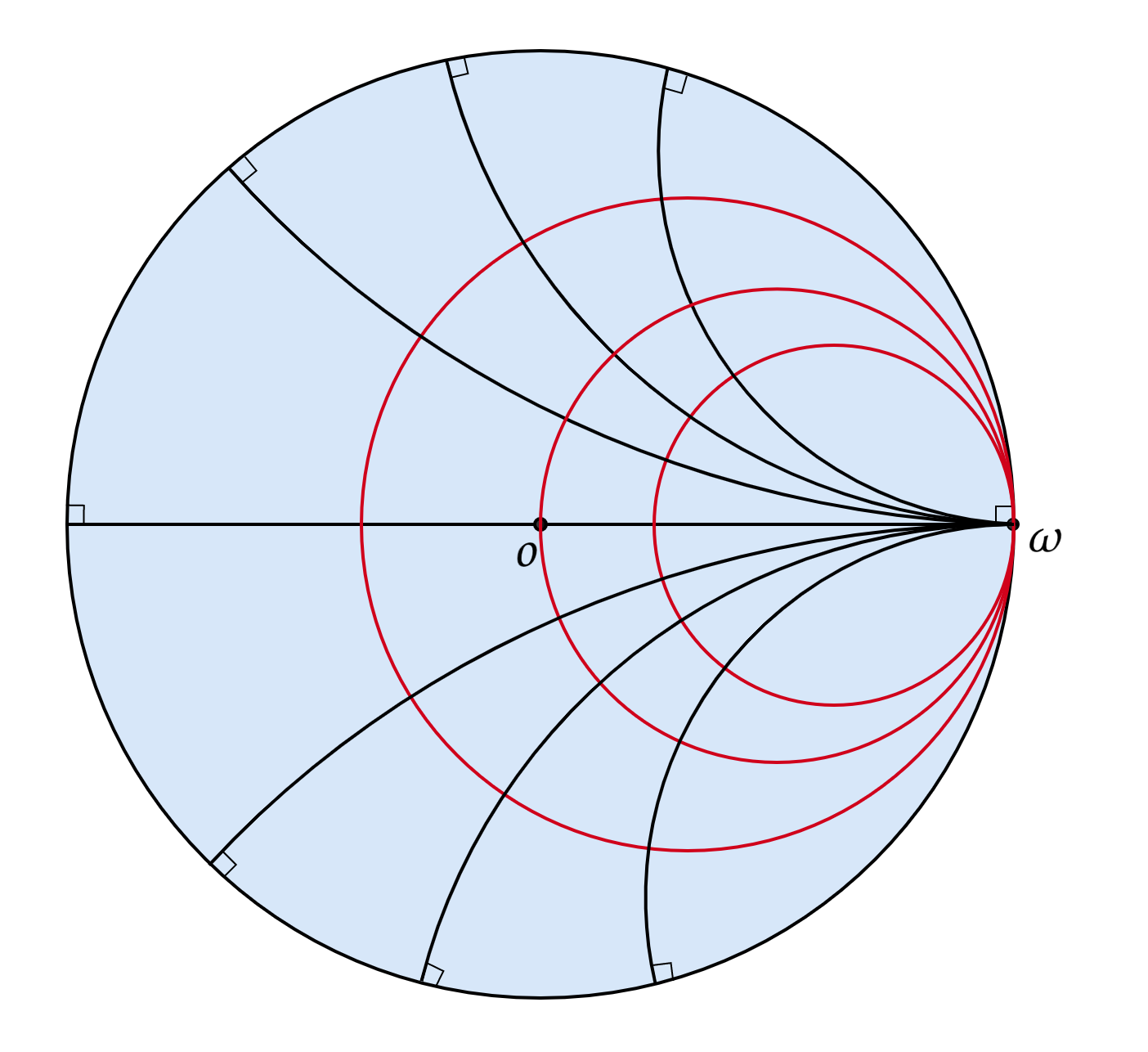}
\caption{A 2-d Poincar\'{e} disk model $\mathbb{B}^2$ and its boundary $\partial\mathbb{B}^2 = \mathbb{S}^1$. Given an ideal point $\bomega \in \partial\mathbb{B}^2$, the black lines and curves are hyperbolic geodesics starting (ending) at $\bomega$ and the red circles are horocycles centered at $\bomega$.}
\vspace{-5pt}
\label{horocycle}
\end{wrapfigure}


\subsection{Horospherical Decision Boundaries for Classification in Hyperbolic Sapce}

\emph{Horospheres}, which are the level sets of the \emph{Busemann function} in hyperbolic spaces, are the analogs of Euclidean hyperplanes \cite{bonola1955non}. Horospheres (horocycles) are contained in the Poincar\'{e} ball (disk) and are tangential to the ball (disk) at an ideal point as shown in Figure~\ref{horocycle}. A collection of horospheres centered at the same ideal point are parallel to each other and the lengths of geodesic segments between two horospheres are all equal, just as the lengths of line segments between parallel hyperplanes in Euclidean space are all equal. This property of horospheres was explored by \cite{chami2021horopca} to develop a dimensionality reduction method for data in hyperbolic space. By using horospherical projection, they are able to preserve the distance information in the original data. However, there is no literature on constructing a `linear' classifier in hyperbolic space using horospheres as the decision boundaries although the horospheres are the hyperbolic equivalent of Euclidean hyperplanes.  Therefore, it is natural to consider the use of horospheres as decision boundaries for classification in hyperbolic spaces. In this work, we propose a novel \emph{hyperbolic large-margin classifier using horospheres as decision boundaries in the Poincar\'{e} model.} We term this classifier as a \emph{HoroSVM}. The horospheres are well-defined in the Poincar\'{e} ball model. A toy example as shown in Figure~\ref{toy_example} demonstrates the advantage of horospheres decision boundaries over geodesic decision boundaries. As the tree-structured data grows in depth, leaf nodes are embedded closer to each other within a subtree and among different subtrees. One of the classification problems in hyperbolic space is to determine whether a node belongs to a chosen subtree given the  embedding. 
For comparison purposes, the decision boundaries of HoroSVM (Figure~\ref{toy_horo}) and hyperboloid SVM \cite{cho2019large} (Figure~\ref{toy_geo}) are shown in the figure. 
As evident, the horospheres decision boundary perfectly separates (the root node is excluded in training) the data while the geodesic decision boundary makes several mistakes on both positive and negative samples.
We present a novel formulation of the classification problem in the hyperbolic space as a geodesically convex optimization problem on a Riemannian manifold. This optimization problem can be easily solved using any Riemannian gradient descent technique guaranteeing global optimality. Gradient-based optimizations for geodesically convex problems guaranteeing global optimal solutions are the topic of investigation in optimization literature and we refer the reader to \cite{zhang2016first} for detailed convergence analysis of several such optimization methods. Further, we empirically validate our method on several real and synthetic data sets. 


\begin{figure}[t]
\centering
\hfill
\subfigure[Horocycle decision boundary]{
\centering
\includegraphics[width=0.21\linewidth,trim={10cm 10cm 0 10cm},clip]{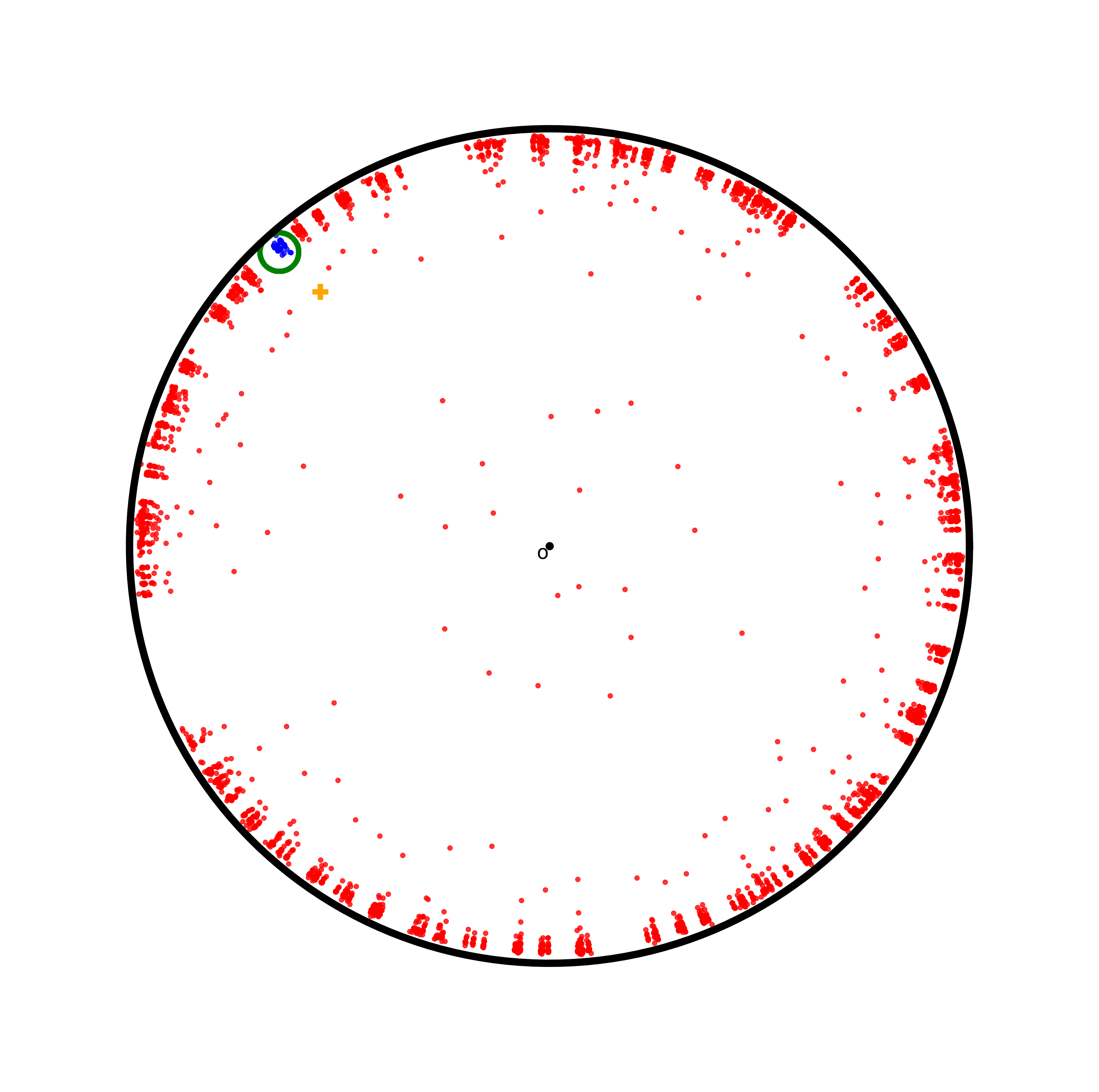} 
\includegraphics[width=0.26\linewidth,trim={0cm 95cm 55cm 0cm},clip]{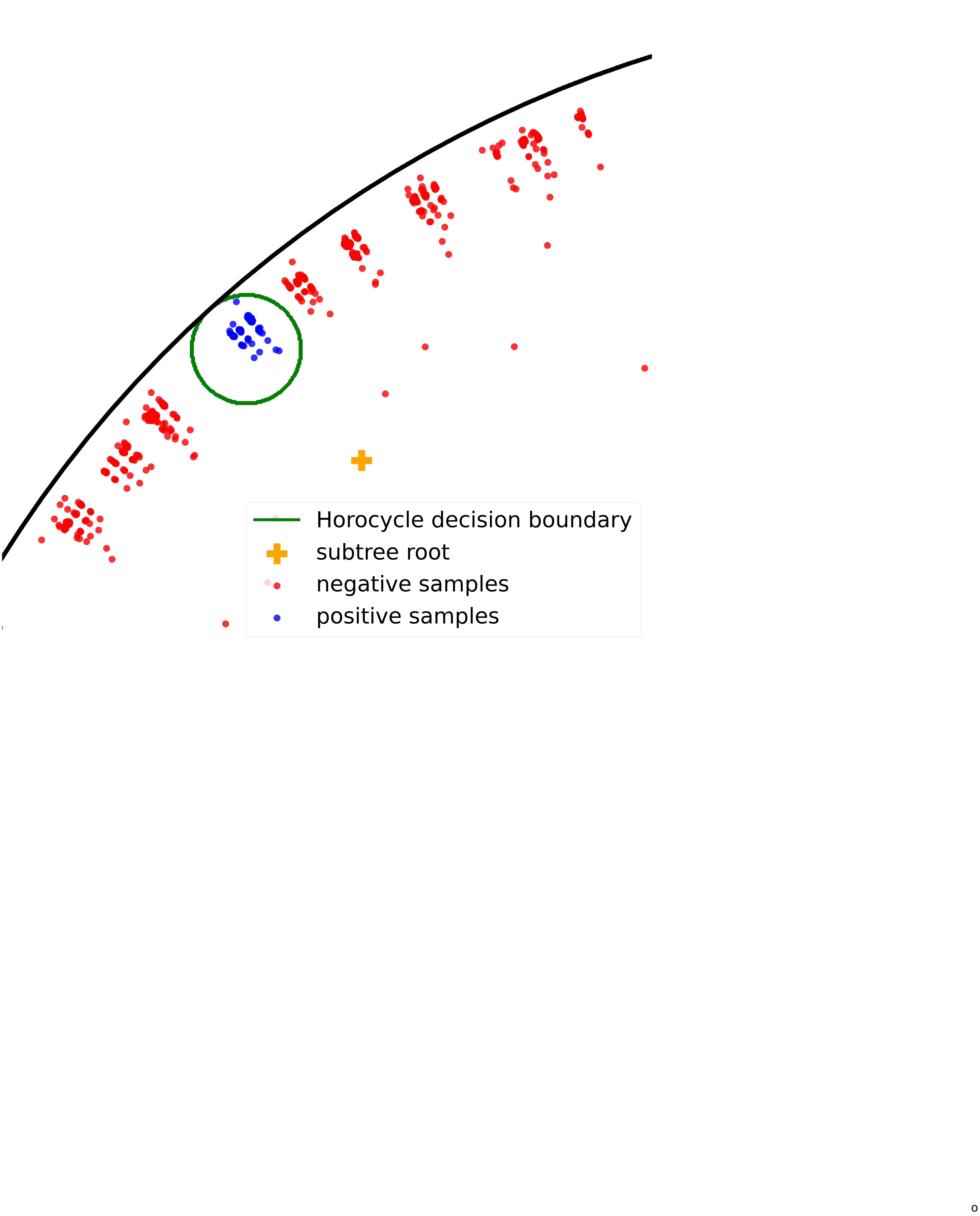}
\label{toy_horo}
}
\subfigure[Geodesic decision boundary]{
\centering
\includegraphics[width=0.21\linewidth,trim={10cm 10cm 0 10cm},clip]{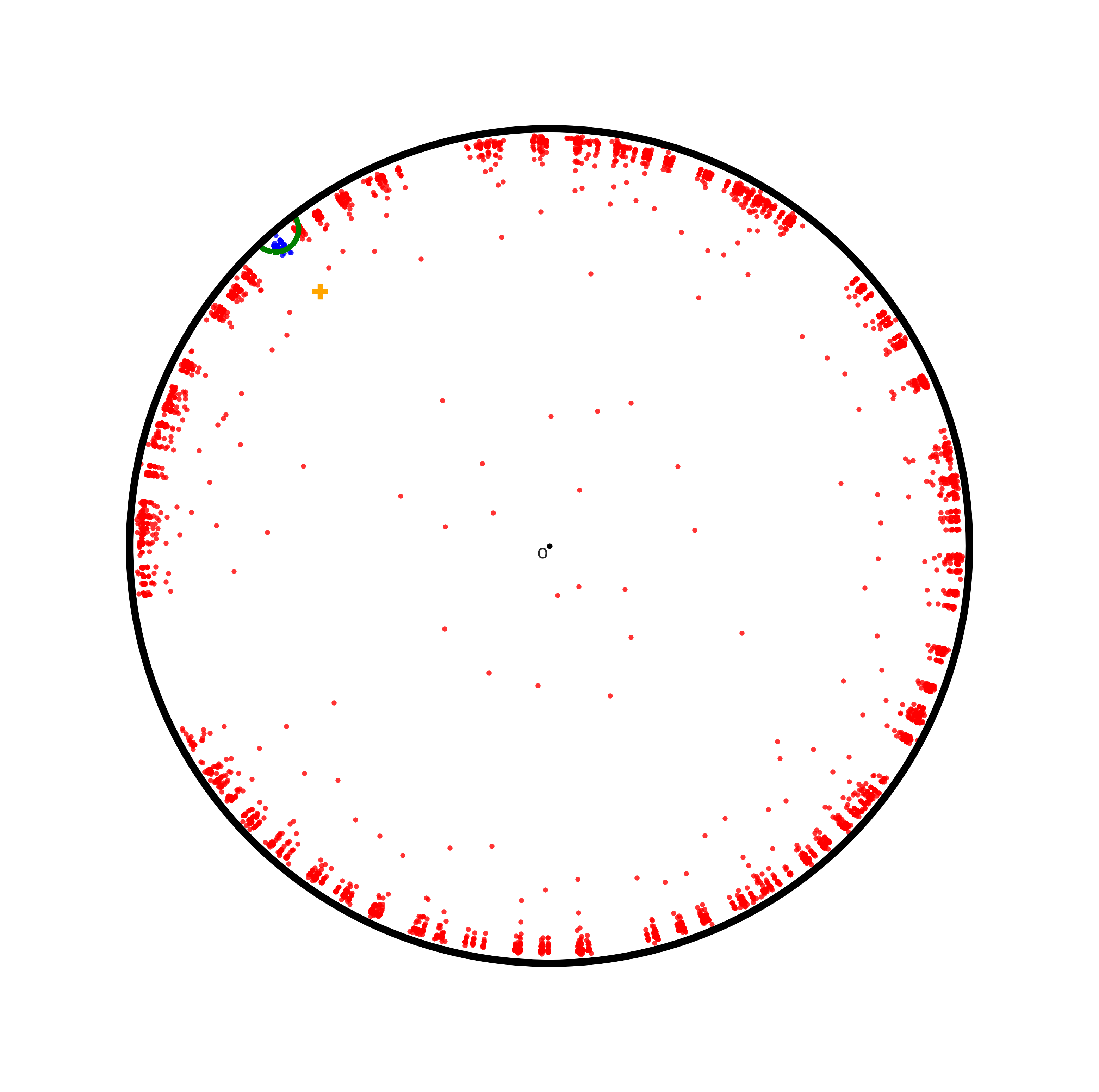} 
\includegraphics[width=0.26\linewidth,trim={0cm 95cm 55cm 0cm},clip]{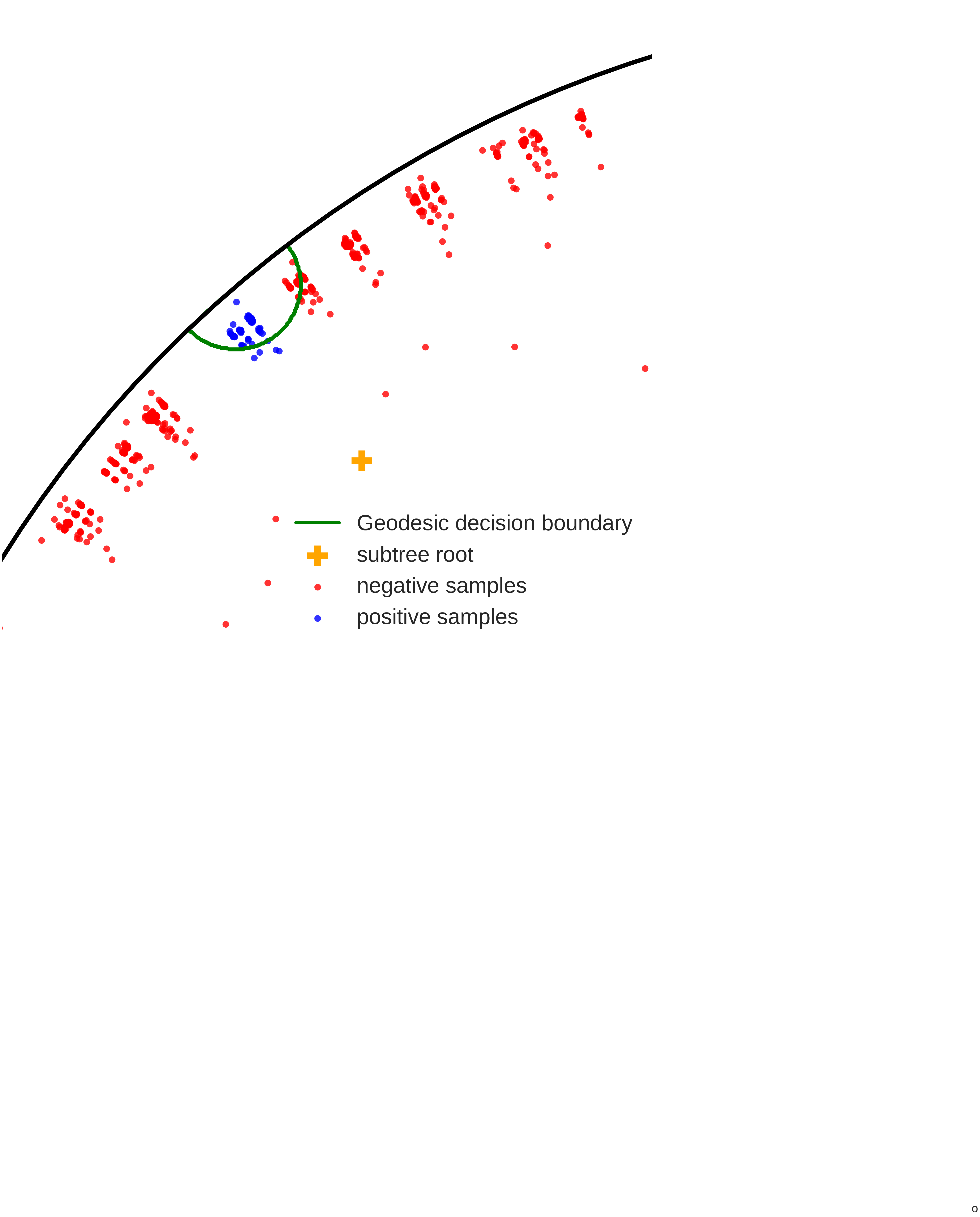}
\label{toy_geo}
}
\hfill
\caption{A balanced tree with depth 6 and spread 4 embedded in a 2-d Poincar\'{e} ball model using \cite{ganea2018hyperbolic} is depicted in the figure. The orange plus node is the root of a chosen subtree, the blue dots are positive samples (nodes of the subtree) and the red dots are negative samples. (a) depicts a HoroSVM performance on the classification of positive and negative samples/nodes along with a zoomed-in version on its right. (b) depicts the geodesic boundary from the competing method, Hyperboloid SVM \cite{cho2019large}, along with the zoomed-in version to its right.}
\label{toy_example}
\end{figure}

It should be noted that a horosphere decision boundary has been used in  some recent works  \cite{wang2020laplacian, sonoda2022fully} in constructing HNNs. For example, authors in \cite{wang2020laplacian} proposed hyperbolic neuron models using the Busemann function as a generalization of the Euclidean inner product to extract horosphere features from data.
Authors in \cite{sonoda2022fully} proposed a shallow fully-connected continuous network spanned by (hyperbolic) neurons, on noncompact symmetric space (including hyperbolic space) using the Helgason-Fourier transform. Authors in \cite{yu2022hyla} produced Euclidean features from hyperbolic embeddings via the eigenfunctions of the Laplace operator in the hyperbolic space where the eigenfunctions involve horosphere features. Note that none of the above works developed a large margin classifier using the horosphere as a decision boundary. To the best of our knowledge, our work is the first in the literature to present a convex optimization formulation of a large-margin classifier using a horosphere decision boundary in a hyperbolic space.

The rest of this paper is organized as follows.
In Section~\ref{sec:background}, we present some background on hyperbolic geometry pertinent to the work presented here. In Section~\ref{sec:horocycle}, we present our horospherical boundary-based classification methods. 
Experimental results are presented in Section~\ref{sec:experimens} to demonstrate the advantage of our HoroSVM over competing hyperbolic classifiers. Finally, we conclude in Section~\ref{conc}.

\section{Background}
\label{sec:background}

In this section, we review some basic concepts of hyperbolic geometry including the generalization of the Euclidean hyperplane to the hyperbolic space namely, the horosphere.

\subsection{Hyperbolic Space and the Poincar\'{e} Ball Model} \label{sec:hyperbolic}

There are five isometric models of the hyperbolic space: the Poincar\'{e} ball model, the Lorentz model, the Klein model, the upper-half space model, and the Hemisphere model \cite{JamesCannon1997}. We choose the  Poincar\'{e} Ball model in this paper as it is easy to visualize and the Busemann function has a nice closed-form expression in this model. Note that the decision boundary of choice in our work is the level-set of the Busemann function namely, the horosphere.

An $n$-dimensional Poincar\'{e} Ball model, denoted by $(\mathbb{B}^n, g_{\mathbb{B}})$, consists of all points in an open ball of radius 1, i.e., $\mathbb{B}^n = \{\bx \in \mathbb{R}^{n}: \lVert \bx \rVert < 1\}$, and equipped with the Riemannian metric $g_{\mathbb{B}}(\bx) = 4 (1- \lVert \bx \rVert^2)^{-2} g_{\mathbb{R}}$, where $\lVert \cdot \rVert$ is the Euclidean $L_2$ norm, and $g_{\mathbb{R}}$ is the Euclidean metric. The geodesic distance between points $\bx, \by \in \mathbb{B}^n$ is $ d_{\mathbb{B}}(\bx,\by) = \cosh^{-1} \left(1+ 2 \frac{\lVert \bx -\by \rVert^2}{(1-\lVert \bx \rVert^2)(1-\lVert \by \rVert^2)}\right)$.

\subsection{Horospheres}\label{sec:horocycles}

\paragraph{Geodesics, geodesic rays, and ideal points} The shortest path that connects two points in the Poincar\'{e} Ball model is called a \emph{geodesic segment}. A \emph{geodesic ray}  is a geodesic segment that can be infinitely extended in one direction. We call the endpoint at infinity of a geodesic ray an \emph{ideal point}. For $\mathbb{B}^n$, ideal points form the boundary of the ball: $\partial \mathbb{B}^n = \mathbb{S}^{n-1} = \{x\in\mathbb{R}^n: \lVert \bx \rVert =1\}$, where $ \mathbb{S}^{n-1}$ is the $(n-1)$-dimensional hypersphere. The hypersphere $(\mathbb{S}^{n-1},g_{\mathbb{S}})$ is a Riemannian manifold equipped with the Riemannian metric $  g_{\mathbb{S}} =  4 (1+ \lVert \bx \rVert^2)^{-2} g_{\mathbb{R}}$.

\paragraph{Busemann function \cite{bridson2013metric}} Let $\bomega \in \partial \mathbb{B}^n$ be an ideal point and $\gamma_{\bomega}: [0,\infty) \rightarrow \mathbb{B}^n$ a geodesic ray pointing $\bomega$. The \emph{Busemann function} is defined as
\begin{equation}\label{eq:busemann_function}
    b_{\bomega}(\bx) = \lim_{t\rightarrow\infty} (d(\gamma_{\bomega}(t),\bx)-t), \quad \bx\in \mathbb{B}^n.
\end{equation}

In the Poincar\'{e} Ball model, Eq.~\eqref{eq:busemann_function} has a closed form : $b_{\bomega}(\bx) = -\log \frac{1-\lVert \bx \rVert^2}{\lVert\bomega - \bx \rVert^2}.$

\paragraph{Horospheres \cite{bridson2013metric}} In $\mathbb{\mathbb{B}}^n $, a horosphere is a $(n-1)$-dimensional sphere that is internally tangent to $\partial\mathbb{B}^n$ at an ideal point. For a given $\bomega \in \partial\mathbb{B}^n$, the level sets of Busemann function $b_{\bomega}(\bx)$ in the Poincar\'{e} ball model is a series of horospheres tangent at $\bomega$. Horospheres are hyperbolic hyperplanes in the sense that the corresponding construction in Euclidean space gives a hyperplane.

\paragraph{Poincar\'{e} inner product} The Busemann function can be regarded as the inner product between a direction vector and a point in the Poincar\'{e} ball model. We call it Poincar\'{e} inner product and write it as $\langle \bomega, \bx \rangle_{\mathbb{B}} = \log \frac{1-\lVert \bx \rVert^2}{\lVert \bomega- \bx \rVert^2}$
\footnote{Note that the sign is opposite as in the Busemann function. } .

Given $\bomega \in \partial\mathbb{B}^n$, the Poincar\'{e} inner product $\langle \bomega, \cdot \rangle_{\mathbb{B}}$ is constant over horosphere tangent at $\bomega$.  Note that the Euclidean inner product $\langle \boldsymbol{w}, \cdot \rangle$ is constant over a hyperplane that is perpendicular to a given direction $\boldsymbol{w}$. Let $\Pi $ denote the set of horospheres of $\mathbb{B}^n$. A horosphere $\pi \in \Pi$ can be parameterized by $0<\mu \in \mathbb{R}^+$, $\bomega \in  \mathbb{S}^{n-1}$, and $b \in \mathbb{R}$ as 
\begin{equation}
    \pi_{\mu,\bomega,b} := \{\bz \in \mathbb{B}^n |\mu\langle \bomega,\bz \rangle_{\mathbb{B}} -b = 0 \}.
\end{equation}
We will use $\pi$ or $\pi_{\cdot}$ to represent a horosphere in different parameterizations hereafter.

\section{Horosphere Boundary-based Classification}
\label{sec:horocycle}

In this section, we present the key theoretical contributions of our work namely, a horosphere-based SVM classifier that involves formulating and solving a geodesically convex optimization problem. First, we present some preliminary facts and results about the horospheres. Then, we present the optimization problems for the horospherical perceptron and SVM along with analysis.

\subsection{Point to Horocycle Distance}



While the distance from the origin $\boldsymbol{o}$ to a horosphere has been known for decades \cite{helgason2022groups} (Introduction 4.1, p.31), we present a natural generalization of previous results by providing a closed-form expression for measuring the hyperbolic distance from any arbitrary point $\bx \in \mathbb{B}^n$ to a given horosphere $\pi_{\mu,\bomega,b}$. The following remark provides this result.

\begin{proposition}\label{prop:distance}
     Let $\pi_{\mu,\bomega,b}$ be a horosphere. The hyperbolic distance of a point $\bx \in \mathbb{B}^n$ to a horosphere $\pi_{\mu,\bomega,b}$ is given by
    \begin{equation}
        d_{\mathbb{B}}(\bx, \pi_{\mu,\bomega,b} ) = \frac{\lvert \mu\langle \bomega,\bx \rangle_{\mathbb{B}} -b \rvert}{\mu}.
    \end{equation}
\end{proposition}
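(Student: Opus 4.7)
The plan is to exploit two structural facts that are essentially built into the setup: horospheres with a common ideal point $\bomega$ are precisely the level sets of the Busemann function $b_{\bomega}$, and they are all orthogonal to the family of geodesic rays converging to $\bomega$. Together these imply that the hyperbolic distance from any point $\bx$ to any horosphere centered at $\bomega$ is attained along the (unique) geodesic ray through $\bx$ converging to $\bomega$, and equals the absolute difference of the Busemann values of $\bx$ and of any point on the target horosphere.

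To carry this out, I would first rewrite the defining equation $\mu\langle\bomega,\bz\rangle_{\mathbb{B}} - b = 0$ in terms of the Busemann function. Using the footnote-level identity $\langle\bomega,\bz\rangle_{\mathbb{B}} = -b_{\bomega}(\bz)$ from the Poincar\'e inner product definition, the horosphere $\pi_{\mu,\bomega,b}$ is exactly the level set $\{\bz \in \mathbb{B}^n : b_{\bomega}(\bz) = -b/\mu\}$. Combining this with the parallelism property of horospheres with a common ideal center (which the paper itself invokes in the introduction when motivating horospheres as hyperbolic hyperplanes), the hyperbolic distance from $\bx$ to $\pi_{\mu,\bomega,b}$ equals
\[
d_{\mathbb{B}}(\bx,\pi_{\mu,\bomega,b}) = \bigl|\,b_{\bomega}(\bx) - (-b/\mu)\,\bigr| = \bigl|\,{-\langle\bomega,\bx\rangle_{\mathbb{B}}} + b/\mu\,\bigr| = \frac{|\mu\langle\bomega,\bx\rangle_{\mathbb{B}} - b|}{\mu},
\]
where the final step uses $\mu>0$ to pull the positive factor out of the absolute value.

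The main obstacle — and really the only substantive step — is justifying that the Busemann function gives the signed hyperbolic distance between parallel horospheres. Rather than argue this from scratch via $\nabla b_{\bomega}$ being a unit vector field tangent to the geodesics pointing at $\bomega$, I would reduce to the known origin case cited in the paper (Helgason, Introduction 4.1, p.31) by an explicit isometry of $\mathbb{B}^n$. Choose a M\"obius isometry $\varphi$ of $\mathbb{B}^n$ that carries $\bx$ to the origin $\bm{o}$; the ideal point $\bomega$ is mapped to some $\bomega' \in \partial\mathbb{B}^n$ and $\pi_{\mu,\bomega,b}$ to a horosphere centered at $\bomega'$. Since both $d_{\mathbb{B}}$ and $b_{\bomega}$ are isometry-invariant (the latter transforms as $b_{\varphi(\bomega)}\circ\varphi = b_{\bomega} - b_{\bomega}(\varphi^{-1}(\bm{o}))$, which is enough since only differences of Busemann values enter), the general formula follows from the already-established origin-to-horosphere case. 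This reduction sidesteps the need to re-derive any differential geometric facts and keeps the proof in line with the references already used in the paper.
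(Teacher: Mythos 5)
Your proof is correct and follows essentially the same route as the paper's: both identify the horosphere through $\bx$ and the target $\pi_{\mu,\bomega,b}$ as level sets of the Busemann function (at levels corresponding to $\langle \bomega,\bx \rangle_{\mathbb{B}}$ and $b/\mu$) and then invoke the equidistance of concentric horospheres to obtain $\lvert \langle \bomega,\bx \rangle_{\mathbb{B}} - b/\mu \rvert$. The only difference is in provenance, not substance: the paper imports the equidistance property as Lemma~\ref{wanglemma} from \cite{wang2020laplacian}, whereas you justify it by an isometry reduction to Helgason's origin-to-horosphere formula, which is a valid and somewhat more self-contained substitute.
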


Notice that it shares a similarity to the Euclidean distance of a point to a hyperplane. Before presenting the proof for Proposition~\ref{prop:distance}, we recall the following Fact~\ref{fact} and Lemma~\ref{wanglemma} from \cite{wang2020laplacian}.

 \begin{fact}\label{fact}
Given an ideal point $\bomega$ and a point $\bx \in \mathbb{B}^n$, there is a unique horosphere passing through $\bx$ and tangent at $\bomega$.
 \end{fact}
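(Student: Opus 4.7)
The plan is to exploit the explicit Euclidean description of horospheres in the Poincar\'e ball model. By the paragraph on Horospheres in the excerpt, a horosphere tangent at $\bomega \in \partial\mathbb{B}^n$ is an $(n-1)$-dimensional Euclidean sphere internally tangent to $\partial\mathbb{B}^n$ at $\bomega$. Any such sphere has its Euclidean center on the radius through $\bomega$, at $(1-r)\bomega$ for some Euclidean radius $r$, so the family of all horospheres tangent at $\bomega$ is a one-parameter family indexed by $r$.

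To obtain both existence and uniqueness, I would impose the single scalar constraint that this sphere pass through the given $\bx$, namely $\lVert \bx - (1-r)\bomega \rVert = r$. Squaring and using $\lVert \bomega \rVert = 1$ turns this into a \emph{linear} equation in $r$ with unique solution
\[
    r = \frac{\lVert \bomega - \bx \rVert^2}{2(1 - \langle \bx, \bomega \rangle)}.
\]
A short admissibility check shows $r \in (0,1)$: the denominator is positive by Cauchy--Schwarz ($\langle \bx, \bomega\rangle \le \lVert \bx\rVert\,\lVert\bomega\rVert < 1$), the numerator is positive since $\bx \neq \bomega$, and the bound $r < 1$ reduces after clearing to $\lVert \bx \rVert^2 < 1$. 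Uniqueness of the horosphere is then immediate from uniqueness of $r$, and existence from the fact that the solved $r$ is admissible.

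An even shorter route uses the Busemann function. The excerpt already states that the level sets of $b_{\bomega}$ are precisely the horospheres tangent at $\bomega$, and the closed form $b_{\bomega}(\bx) = -\log\bigl((1-\lVert\bx\rVert^2)/\lVert\bomega - \bx\rVert^2\bigr)$ is well-defined on $\mathbb{B}^n$. The required horosphere is then just the level set $\{\by \in \mathbb{B}^n : b_{\bomega}(\by) = b_{\bomega}(\bx)\}$; existence is immediate, and uniqueness is the trivial observation that the level sets of a function partition its domain.

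There is no real obstacle beyond a bookkeeping citation: the one step that relies on previously quoted material is identifying Euclidean spheres internally tangent to $\partial\mathbb{B}^n$ at $\bomega$ with level sets of $b_{\bomega}$. This is classical and is already part of the background the excerpt inherits from the cited references, so either of the two routes above closes the argument with only elementary algebra.
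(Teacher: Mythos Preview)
Your proposal is correct; both the Euclidean-sphere parameterisation and the Busemann level-set argument are sound, and the algebra and admissibility checks go through as written.

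Note, however, that the paper does not actually prove Fact~\ref{fact}: it is stated without proof and attributed (together with Lemma~\ref{wanglemma}) to \cite{wang2020laplacian}. So there is no ``paper's own proof'' to compare against. Your two arguments are both more explicit than what the paper provides. Of the two, the Busemann route is closer in spirit to how the paper \emph{uses} the fact downstream (Lemma~\ref{wanglemma} and the proof of Proposition~\ref{prop:distance} work entirely in terms of the level-set index $\lambda$), while your Euclidean-radius computation gives a self-contained elementary justification that does not rely on the cited identification of horospheres with Busemann level sets.
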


\begin{lemma}\cite{wang2020laplacian}\label{wanglemma}
Let $\Pi_{\bomega}$ be the set of horocycles of $\mathbb{B}^n$ tangent at $\bomega$. Given $\lambda \in \mathbb{R}$, let $\pi_{\lambda, \bomega}$ be the unique horosphere that passes through $\tanh(\lambda/2) \cdot \bomega$ and tangent at $\bomega$. Note that $\Pi_{\bomega} = \cup_{\lambda \in \mathbb{R}}\{\pi_{\lambda, \bomega}\}$. We have the following 
two results: (i) the hyperbolic lengths of geodesic (that pass through $\bomega$) segments between $\pi_{\lambda_1, \bomega}$ and $\pi_{\lambda_2, \bomega}$  are equal  to $\lvert \lambda_1 - \lambda_2 \rvert $; (ii) $\langle \bomega,x \rangle_{\mathbb{B}} = \lambda$  for any $\bx \in \pi_{\lambda, \bomega}$.

\end{lemma}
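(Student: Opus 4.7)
The plan is to prove (ii) first by a direct Euclidean computation, and then deduce (i) from (ii) together with the closed form $b_{\bomega}(\bx)=-\langle\bomega,\bx\rangle_{\mathbb{B}}$ stated in Section~\ref{sec:horocycles}. The ordering is natural: once we know $\langle\bomega,\cdot\rangle_{\mathbb{B}}\equiv\lambda$ on $\pi_{\lambda,\bomega}$, the Busemann function is the constant $-\lambda$ there, and (i) is reduced to how $b_{\bomega}$ varies along a geodesic pointing to $\bomega$.

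For (ii), I would first check the formula at the anchor point $\bx_0=\tanh(\lambda/2)\bomega$: using $\|\bomega\|=1$,
\[
\langle\bomega,\bx_0\rangle_{\mathbb{B}}=\log\frac{1-\tanh^2(\lambda/2)}{(1-\tanh(\lambda/2))^2}=\log\frac{1+\tanh(\lambda/2)}{1-\tanh(\lambda/2)}=\lambda,
\]
using the identity $(1+\tanh x)/(1-\tanh x)=e^{2x}$. Next, I would identify $\pi_{\lambda,\bomega}$ with its Euclidean description: writing $t=\tanh(\lambda/2)$, it is the sphere internally tangent to $\partial\mathbb{B}^n$ at $\bomega$ through $\bx_0$, with center $c=\tfrac{1+t}{2}\bomega$ and radius $r=\tfrac{1-t}{2}$. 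Expanding $\|\bx-c\|^2=r^2$ gives $\|\bx\|^2=-t+(1+t)(\bx\cdot\bomega)$, from which a short manipulation yields $1-\|\bx\|^2=(1+t)(1-\bx\cdot\bomega)$ and $\|\bomega-\bx\|^2=(1-t)(1-\bx\cdot\bomega)$. The common factor $1-\bx\cdot\bomega$ (nonzero for $\bx\in\mathbb{B}^n$ by Cauchy--Schwarz) cancels, leaving $(1-\|\bx\|^2)/\|\bomega-\bx\|^2=(1+t)/(1-t)=e^{\lambda}$, so $\langle\bomega,\bx\rangle_{\mathbb{B}}=\lambda$ for every $\bx\in\pi_{\lambda,\bomega}$.

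For (i), fix a unit-speed geodesic ray $\gamma$ with $\gamma(\infty)=\bomega$. Taking $\gamma$ itself as the defining ray in \eqref{eq:busemann_function} yields $b_{\bomega}(\gamma(s))=\lim_{t\to\infty}(|t-s|-t)=-s$ because geodesics are distance-realizing in $\mathbb{B}^n$. Combined with (ii), if $\gamma(s_i)\in\pi_{\lambda_i,\bomega}$ then $-s_i=-\lambda_i$ up to a common additive constant fixed by the parametrization, and the hyperbolic length of the segment is $|s_1-s_2|=|\lambda_1-\lambda_2|$.

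The main obstacle is justifying that $b_{\bomega}\circ\gamma$ has slope $-1$ for \emph{every} geodesic pointing to $\bomega$, not merely the reference ray used to define $b_{\bomega}$. The cleanest resolution uses the standard fact that two Busemann functions associated to rays with the same ideal endpoint differ by an additive constant, so replacing the reference ray by $\gamma$ is legitimate. A fully elementary alternative is to first verify (i) for the diameter through $\bomega$ using $\cosh^{-1}(1+2\sinh^2(\tfrac{\lambda_1-\lambda_2}{2}))=|\lambda_1-\lambda_2|$, and then transport to an arbitrary geodesic ending at $\bomega$ via a M\"obius isometry of $\mathbb{B}^n$ in the stabilizer of $\bomega$, which carries horospheres tangent at $\bomega$ to horospheres tangent at $\bomega$ and shifts the $\lambda$-parameter by a constant.
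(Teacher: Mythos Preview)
The paper does not prove this lemma; it is quoted verbatim from \cite{wang2020laplacian} and then used as a black box in the proof of Proposition~\ref{prop:distance}. So there is no in-paper argument to compare your proposal against.

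Your proof is correct and self-contained. The computation for (ii) is clean: identifying $\pi_{\lambda,\bomega}$ as the Euclidean sphere with center $\tfrac{1+t}{2}\bomega$ and radius $\tfrac{1-t}{2}$ (where $t=\tanh(\lambda/2)$), expanding $\|\bx-c\|^2=r^2$, and cancelling the common factor $1-\bx\cdot\bomega$ is exactly the right manoeuvre. Deducing (i) from (ii) via the Busemann function is the natural route. The only phrase that deserves tightening is ``$-s_i=-\lambda_i$ up to a common additive constant'': make this explicit by writing $b_{\bomega}=b_{\bomega}^{(\gamma)}+c$ for the Busemann function based on the ray $\gamma$, so that $-\lambda_i=b_{\bomega}(\gamma(s_i))=-s_i+c$ and hence $s_1-s_2=\lambda_1-\lambda_2$ on the nose. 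Your M\"obius-isometry alternative for (i) is also valid and sidesteps the need to quote the additive-constant fact about Busemann functions of asymptotic rays.
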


Figure~\ref{horocycle_distance} shows a 2D Poincar\'{e} disk model $\mathbb{B}^2$ and its boundary $\mathbb{S}^1$. The point $o$ is the origin of the disk, $\bx \in \mathbb{B}^2$ is a point, and $\bomega \in \mathbb{S}^1$ is a point at infinity (an ideal point). Two geodesics ending at the same $\bomega$ from $\bx$ and $o$ respectively are shown in the figure (black solid line/curve). The circle $\pi_{\mu,\bomega,b}$ is a given horocycle tangent (red solid circle) at $\bomega$. The hyperbolic distance $d_{\mathbb{B}}(\bx,\pi_{\mu,\bomega,b})$ between $\bx$ and $\pi_{\mu,\bomega,b}$ is identified as the  distance between $\bx$ and $\by_{\bx}$ where $\by_{\bx}$ is the projection of $\bx$ to $\pi_{\mu,\bomega,b}$ along the geodesic ending at $\bomega$. Let $\pi^{\bx}$ (red dashed circle) be the unique horocycle that passes through $\bx$ and is tangent at $\bomega$.  Note that the lengths of all geodesic segments between two horocycles are the same. That is, $d_{\mathbb{B}}(\bx,\pi_{\mu,\bomega,b}) = d_{\mathbb{B}}(\bx, \by_{\bx}) = d_{\mathbb{B}}(\bx_0, \by_0)$, where $\bx_0,\by_0$ are \emph{horocyclic projections} \cite{chami2021horopca} of $\bx , \by_{\bx}$ along $\pi^{\bx}$ and $\pi_{\mu,\bomega,b}$ respectively.

\begin{wrapfigure}{r}{0.35\textwidth}
\vspace{-40pt}
\begin{center}
\centerline{\includegraphics[width=0.3\columnwidth,trim={0cm 0cm 2cm 0cm},clip]{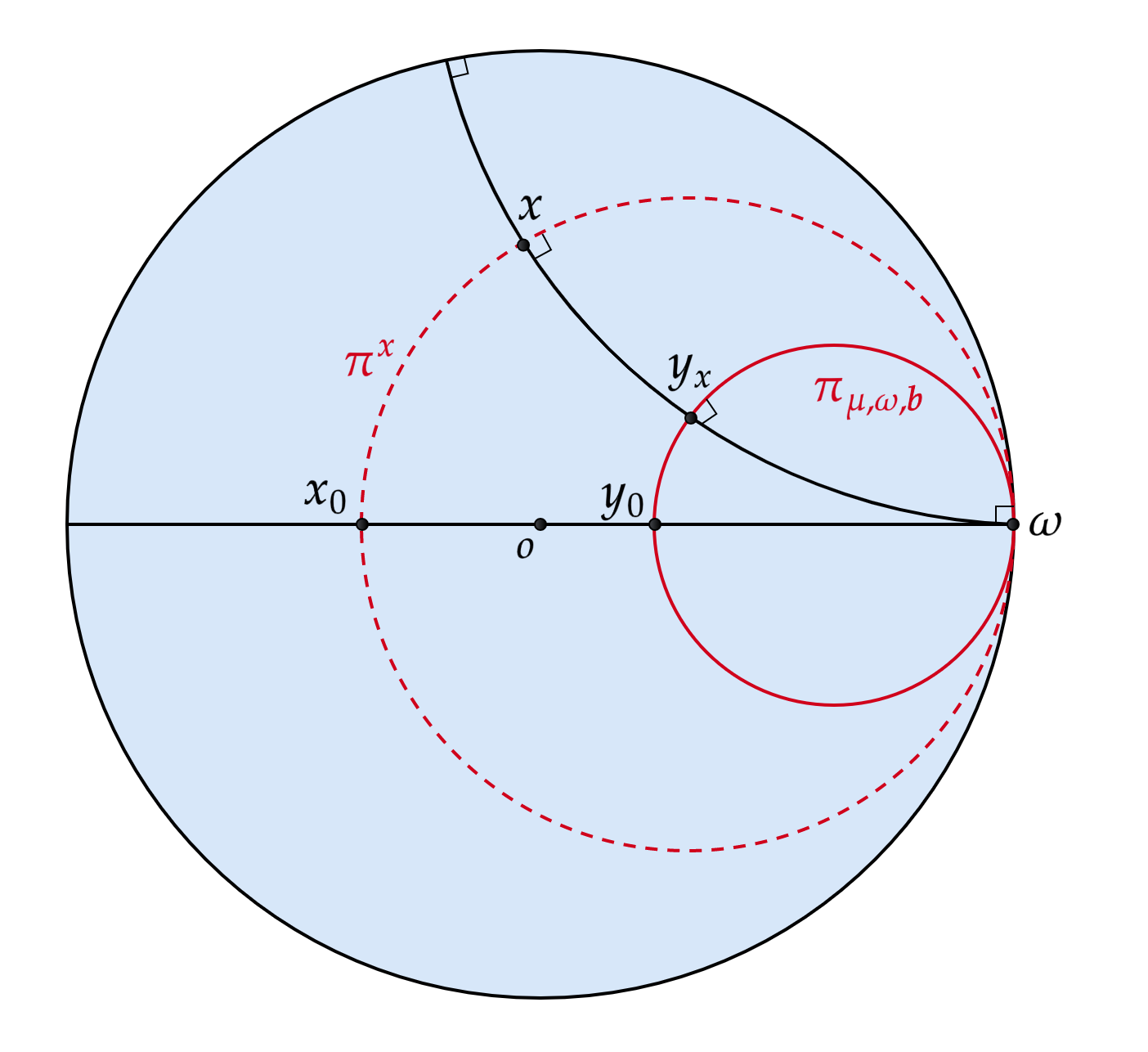}}
\caption{
The relationship between horocycles, geodesic, and the horocyclic projections in $\mathbb{B}^2$. 
}
\label{horocycle_distance}
\end{center}
\vspace{-20pt}
\end{wrapfigure}


\begin{proof}[Proof of Proposition~\ref{prop:distance}]
Given a point $\bx \in \mathbb{B}^n$ and an ideal point $\bomega \in \mathbb{S}^{n-1}$, let $\pi \in \Pi_{\bomega}$ be a horosphere tangent at $\bomega$ and $\pi^{\bx} \in \Pi_{\bomega}$ be the unique horosphere that passes through $\bx$.
 Write $\langle \bomega, \bx \rangle_{\mathbb{B}} = \lambda_{\bx}$ and then $\bx_0 = \tanh(\lambda_{\bx}/2) \cdot \bomega$ is the horospherical projection of $\bx$ along $\pi_{\bx}$. 
 For consistency in notation, we write $\pi^{\bx}$ as $\pi_{\lambda_{\bx},\bomega}$.
 Since the horosphere $\pi_{\mu,\bomega,b}$ can be reparameterized as $\pi_{\lambda, \bomega}$, where $\lambda = \frac{b}{\mu}$, the hyperbolic distance from $\bx$ to $\pi_{\mu,\bomega,b}$ is the hyperbolic distance between $\pi_{\lambda_{\bx},\bomega}$ and $\pi_{\lambda, \bomega}$ which is $\lvert \lambda_{\bx} - \lambda \rvert =  \frac{\lvert \mu\langle \bomega,\bx \rangle_{\mathbb{B}} -b \rvert}{\mu}$ (by Lemma~\ref{wanglemma}). This completes the proof. 
\end{proof}

\subsection{Horospherical Decision Boundaries}
We consider classification problems in hyperbolic space of the following form: $\mathcal{X} \subset \mathbb{B}^n$ denotes the feature space and $\mathcal{Y} = \{\pm 1\}$ denotes the binary label space. In the following, we denote the training set by $ S \subset \mathcal{X} \times \mathcal{Y}$. The decision rule using a horosphere as its decision boundary can be written as the following function $f: \mathcal{X}\mapsto \mathcal{Y}$ where
\begin{equation}
    f(\bx;\mu,\bomega,b) = \text{sign}\left(\mu\langle \bomega,\bx \rangle_{\mathbb{B}} -b\right).
\end{equation}

The positive samples are expected to lie inside a horosphere while the negative samples are expected to lie outside a horosphere. This is analogous to the linear decision boundary in Euclidean space and we will build a horospherical perceptron and a horospherical SVM based on this decision boundary.

It should be noted that in $\mathbb{R}^{n}$ the hyperplane $\xi_{a,\boldsymbol{w},b} = \{\bz \in \mathbb{R}^n | a\langle \boldsymbol{w}, \bz \rangle -b = 0 \}$ where $a\in \mathbb{R}^+$,$b\in \mathbb{R}$, $\boldsymbol{w} \in \mathbb{S}^{n-1}$ is the hyperplane $\xi_{a,-\boldsymbol{w},-b}$. However, the horospheres  $\pi_{\mu, \bomega, b}$ and  $\pi_{\mu, -\bomega, -b}$ respectively represent two distinct horospheres, centered at $\bomega$ and $-\bomega$ respectively. Let $\Pi^+ = \{\pi_{\mu,\bomega,b} \in \Pi | b > 0\}$ and $\Pi^- = \{\pi_{\mu,\bomega,b} \in \Pi | b < 0\}$. Thus, $\Pi^+, \Pi^- \subset \Pi$ and the radius of $\pi \in \Pi^+$ is less than $1/2$ and the radius of $\pi \in \Pi^-$ is greater than $1/2$.
In most cases of classification in hyperbolic space, the positive samples are clustered near the boundaries. Hence,
we restrict ourselves to finding a horosphere $\pi \in \Pi^+$ that separates data, instead of searching over $\Pi$. Intuitively, we are looking for a `small' horosphere that captures the positive samples. We are now ready to present the Horospherical Perceptron followed by the Horospherical SVM.

\subsection{Horospherical Perceptron}

The loss function for the proposed horospherical perceptron is given by
\begin{equation}
    l(\mu, \bomega, b; \bx, y) = \max (0,  -y \cdot \left(\mu\langle \bomega, \bx \rangle_{\mathbb{B}} -b ) \right),
    \quad (\mu, \bomega, b) \in \mathbb{R}^+ \times \mathbb{S}^{n-1} \times \mathbb{R}^+ ,
\end{equation}
which is zero when the instance is classified correctly and is proportional to the signed distance of the instance from the horosphere when it is misclassified. The empirical loss for a given data set $S$ is 
\begin{equation}
    L(\mu, \bomega, b) = \frac{1}{|S|}\sum_{\{\bx, y\} \in S} l(\mu, \bomega, b; \bx, y)
\end{equation}
Hence, the optimal horosphere is learned by solving the above optimization problem on the manifold $\mathbb{R}^+ \times \mathbb{S}^{n-1} \times \mathbb{R}^+$, i.e.,
\begin{equation}\label{optimal}
    \mu^*, \bomega^*, b^* = \arg \min_{(\mu, \bomega, b)} L(\mu, \bomega, b)
\end{equation}

To further analyze this optimization problem, we first recall some facts about geodesic convexity.
\begin{definition} (Geodesically convex sets \cite{udriste2013convex}). Let $(\mathcal{M}, g)$ be a Riemannian manifold. A set $\bA \subseteq \mathcal{M}$ is said to be a  geodesically convex set if, for any two points $\bp, \bq \in \bA$, the geodesic $\gamma_{\bp\bq}$ that connects them is contained in $\bA$.

\begin{definition}(Geodesically convex/concave functions \cite{udriste2013convex})
Let $\bA \subseteq\mathcal{M}$ be a geodesically convex set.  A function $f: \bA \rightarrow \mathbb{R}$ is said to be a geodesically convex function if, for any $\bp,\bq \in \bA$
the composition $f \circ \gamma_{\bp\bq} : [0,1] 
 \rightarrow \mathbb{R} $ is a convex function, where $\gamma_{\bp\bq}: [0,1] \rightarrow \mathcal{M}$ is a geodesic that connects $\bp,\bq$.  $f$ is said to be a geodesically concave function if $-f$ is a geodesically convex function.
\end{definition}

\begin{theorem} \cite{udriste2013convex}
Let $\bA \subseteq\mathcal{M}$ be a geodesically convex set.  A function $f: \bA \rightarrow \mathbb{R}$ is geodesically convex if and only if its epigraph $\text{epi} (f) = \{(\bp, c) | f(\bp) \le c\} \subset \bA \times \mathbb{R}$ is a convex set.
\end{theorem}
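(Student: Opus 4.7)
The plan is to unpack both directions using the product manifold structure on $\mathcal{M} \times \mathbb{R}$, where $\mathbb{R}$ is equipped with its standard flat metric. The key observation I would make first is that the geodesic in $\mathcal{M} \times \mathbb{R}$ joining $(\bp, c_1)$ and $(\bq, c_2)$ is precisely $t \mapsto (\gamma_{\bp\bq}(t),\, (1-t)c_1 + t c_2)$, since $\mathbb{R}$ is geodesically flat and the product metric splits. I will also note that the notion of ``convex set'' on the right-hand side must be interpreted as geodesically convex in this product manifold, so that the statement reduces, in the Euclidean case $\mathcal{M} = \mathbb{R}^n$, to the classical characterization.

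For the forward direction ($\Rightarrow$), I would assume $f$ is geodesically convex and pick two arbitrary points $(\bp, c_1), (\bq, c_2) \in \mathrm{epi}(f)$, so that $f(\bp) \le c_1$ and $f(\bq) \le c_2$. Along the geodesic in $\mathcal{M} \times \mathbb{R}$ described above, geodesic convexity of $f$ applied to $\gamma_{\bp\bq}$ gives $f(\gamma_{\bp\bq}(t)) \le (1-t)f(\bp) + t f(\bq) \le (1-t)c_1 + t c_2$. This is exactly the statement that the point $(\gamma_{\bp\bq}(t), (1-t)c_1 + t c_2)$ lies in $\mathrm{epi}(f)$ for every $t \in [0,1]$, proving $\mathrm{epi}(f)$ is geodesically convex.

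For the reverse direction ($\Leftarrow$), I would assume $\mathrm{epi}(f)$ is geodesically convex and take $\bp, \bq \in \bA$. The points $(\bp, f(\bp))$ and $(\bq, f(\bq))$ trivially belong to $\mathrm{epi}(f)$. By hypothesis, the connecting geodesic $(\gamma_{\bp\bq}(t), (1-t)f(\bp) + t f(\bq))$ remains inside $\mathrm{epi}(f)$, which unwinds to the inequality $f(\gamma_{\bp\bq}(t)) \le (1-t)f(\bp) + t f(\bq)$ for all $t \in [0,1]$. Since $\bp, \bq$ were arbitrary and $\bA$ was assumed geodesically convex so that $\gamma_{\bp\bq} \subset \bA$, this is precisely the definition of $f$ being geodesically convex.

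The argument is essentially a translation of the classical Euclidean proof, so there is no deep obstacle. The only real subtlety I would flag carefully is the interpretation of convexity of $\mathrm{epi}(f) \subset \bA \times \mathbb{R}$: one must verify that geodesics in the product Riemannian manifold genuinely decompose as $(\gamma_{\bp\bq}(t), \text{affine in } t)$, since otherwise the second coordinate of the epigraph condition would not align with the affine combination $(1-t)c_1 + tc_2$ that appears in the convexity inequality. Once this product-geodesic fact is in hand, both implications follow in one line each.
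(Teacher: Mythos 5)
Your proof is correct. Note that the paper itself does not prove this statement at all --- it is quoted from \cite{udriste2013convex} as a known characterization --- so there is no in-paper argument to compare against. What you give is the standard argument, and you correctly isolate the one point that actually requires justification: that geodesics of the product manifold $\mathcal{M}\times\mathbb{R}$ (with the product metric, $\mathbb{R}$ flat) split as $t\mapsto(\gamma_{\bp\bq}(t),\,(1-t)c_1+tc_2)$, so that ``convex set'' for $\text{epi}(f)\subset \bA\times\mathbb{R}$ means geodesic convexity in that product and the second coordinate really is the affine interpolant appearing in the convexity inequality. The only further caveat worth recording is the one implicit in the paper's own definition: the equivalence, as stated, presumes a unique (or canonical) connecting geodesic $\gamma_{\bp\bq}$ in $\bA$, since otherwise ``the geodesic that connects them'' is ambiguous in both the definition of geodesically convex set and in your reverse direction; under that convention, your endpoint inequality for all pairs $\bp,\bq\in\bA$ does upgrade to convexity of $f\circ\gamma_{\bp\bq}$ on all of $[0,1]$, because restrictions of $\gamma_{\bp\bq}$ are again the connecting geodesics of their endpoints.
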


\end{definition}

Now we present the main theoretical result of this paper. 
\begin{theorem}\label{maintheorem}
For a given training data sample $\{\bx,y\} \in S, 0 < \lVert \bx \rVert < R <1$ (the hyperbolic feature $\bx$ neither lie on the center nor lie on the boundary), $l(\mu, \bomega, b; \bx, y)$ is a geodesically convex function on $ \mathbb{R}^{+} \times \bA \times\mathbb{R}^{+}$ and is a geodesically concave function on $\mathbb{R}^{+} \times \bB \times \mathbb{R}^{+}$, where 
\begin{equation}
    \bA  = \left\{ \bnu \in \mathbb{S}^{n-1} \bigg| y \cdot \frac{\bx^T \bnu} {\lVert \bx \rVert} > 0  \right\} \subset \mathbb{S}^{n-1}, \quad
    \bB  = \left\{ \bnu \in \mathbb{S}^{n-1} \bigg| y \cdot \frac{\bx^T \bnu} {\lVert \bx \rVert} < 0  \right\} \subset \mathbb{S}^{n-1}.
\end{equation}
Note that both $\bA$ and $\bB$ are geodesically convex sets.
\end{theorem}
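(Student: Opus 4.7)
The plan is to reduce the theorem to showing the geodesic convexity/concavity of the smooth function $g(\mu,\bomega,b) := -y\bigl(\mu\langle\bomega,\bx\rangle_{\mathbb{B}} - b\bigr)$, since $l = \max\{0, g\}$. For the convexity claim on $\mathbb{R}^{+}\times\bA\times\mathbb{R}^{+}$, because the pointwise maximum of two geodesically convex functions is geodesically convex (and $0$ is trivially so), it suffices to prove that $g$ is geodesically convex there. For the concavity claim on $\bB$, I would first do a short case analysis on $y \in \{\pm 1\}$: the sign condition $y\bomega^T\bx < 0$ defining $\bB$, together with $\mu, b > 0$, keeps the hinge from being active (either $g \ge 0$ on the whole set so that $l = g$, or $g \le 0$ and $l = 0$), reducing the concavity of $l$ to that of $g$ on $\bB$.

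Using the closed form of the Busemann inner product in the Poincar\'e ball model, I would expand
\begin{equation*}
    g(\mu,\bomega,b) \;=\; y\mu\log\lVert\bomega-\bx\rVert^2 \;-\; y\mu\log(1-\lVert\bx\rVert^2) \;+\; yb.
\end{equation*}
The second and third terms are linear in $(\mu, b)$ with $\bomega$-independent coefficients and are therefore trivially both geodesically convex and concave on the $\mathbb{R}^+\times\mathbb{R}^+$ factor. The theorem thus reduces to analyzing the joint geodesic convexity/concavity of the interaction term $F(\mu,\bomega) := y\mu\log\lVert\bomega-\bx\rVert^2$ on $\mathbb{R}^+\times\bA$ and on $\mathbb{R}^+\times\bB$. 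As a preliminary observation, $\bA$ and $\bB$ are each open hemispheres of $\mathbb{S}^{n-1}$ cut out by the hyperplane $\{\bnu^T\bx = 0\}$, and open hemispheres are classically geodesically convex.

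To verify convexity/concavity of $F$, I would parameterize a product geodesic by a great circle $\bomega(t) = \cos t\,\bomega_0 + \sin t\,\bv$ (with $\bv\perp\bomega_0$ of unit length) together with the geodesic $\mu(t)$ on $\mathbb{R}^+$, and compute the one-variable second derivative $F''(t)$. Writing $\lVert\bomega(t)-\bx\rVert^2 = 1 + \lVert\bx\rVert^2 - 2A\cos(t-t_0)$ for amplitude $A\in[0,\lVert\bx\rVert]$ and appropriate phase $t_0$, the defining inequality of $\bA$ (resp.\ $\bB$) becomes $yA\cos(t-t_0) > 0$ (resp.\ $< 0$) along the entire geodesic. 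Combining this sign control with the explicit trigonometric/logarithmic expansion of $F''(t)$ should yield $F''(t)\ge 0$ throughout $\bA$ and $F''(t)\le 0$ throughout $\bB$.

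The main obstacle is this Hessian sign analysis. Because $\log\lVert\bomega-\bx\rVert^2$ itself can change sign on the hemisphere $\bA$ (being negative when $\bomega$ is near $\bx/\lVert\bx\rVert$ and positive near the equator), the joint second derivative in $(\mu,\bomega)$ does not have an obviously definite sign without carefully matching the trigonometric expansion above with the hemisphere constraint and the relative scaling between the $\mu$- and $\bomega$-directions of the product geodesic. The hypothesis $0 < \lVert\bx\rVert < R < 1$ excludes the degenerate cases ($\bx = \mathbf{0}$ makes $\bA,\bB$ undefined, while $\lVert\bx\rVert \to 1$ allows $\lVert\bomega-\bx\rVert$ to vanish within each hemisphere), and I expect it to play a key role in closing the estimate.
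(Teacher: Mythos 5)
Your proposal stops at exactly the decisive point and, when that point is worked out, the planned inequality fails. Freeze $\mu$ and take $y=1$; with your parameterization $u(t):=\lVert\bomega(t)-\bx\rVert^2=1+\lVert\bx\rVert^2-2A\cos(t-t_0)$ one gets
\begin{equation*}
\frac{d^2}{dt^2}\log u(t)\;=\;\frac{u''u-(u')^2}{u^2}\;=\;\frac{2A\bigl(1+\lVert\bx\rVert^2\bigr)\cos(t-t_0)-4A^2}{u(t)^2},
\end{equation*}
whose sign is \emph{not} controlled by the hemisphere condition $A\cos(t-t_0)>0$: it is strictly positive at the pole (take the great circle through $\bx/\lVert\bx\rVert$, so $A=r:=\lVert\bx\rVert$ and the numerator at $t=t_0$ is $2r(1-r)^2>0$), but strictly negative on the nonempty open part of $\bA$ where $0<\cos(t-t_0)<2A/(1+r^2)$. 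So $\bomega\mapsto\log\lVert\bomega-\bx\rVert^2$ is neither geodesically convex nor geodesically concave on the open hemisphere, the hypothesis $0<\lVert\bx\rVert<R<1$ does not repair this (the sign change occurs for every $r\in(0,1)$), and your target $F''\ge 0$ on $\mathbb{R}^+\times\bA$ cannot hold even on the slice with $\mu$ constant. Two of your auxiliary reductions are also leaky: for $y=-1$ on $\bB$ the condition $y\,\bomega^T\bx<0$ does not fix the sign of $g$, because $\langle\bomega,\bx\rangle_{\mathbb{B}}>0$ iff $\bomega^T\bx>\lVert\bx\rVert^2$ (not iff $\bomega^T\bx>0$), so the hinge can be active on part of $\bB$ and inactive on the rest; and $\max(0,g)$ with $g$ geodesically concave is not concave in general, so the concavity half genuinely needs the dichotomy you assert.

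For comparison, the paper's own proof takes the same route but hides the obstruction you ran into behind an invalid composition rule. It sets $\mu=1$, $b=0$, reduces to $\bomega\mapsto\langle\bomega,\bx\rangle_{\mathbb{B}}=\log(1-\lVert\bx\rVert^2)-\log h(\bomega)$ with $h(\bomega)=\lVert\bomega-\bx\rVert^2$, correctly shows via the epigraph/sublevel-set computation that $h$ is geodesically convex on $\bA$ (this is your observation that $1+r^2-2A\cos(t-t_0)$ is convex in $t$ while $A\cos(t-t_0)>0$), and then concludes from ``$-\log$ is decreasing and convex.'' But $(-\log)\circ h$ is convex when $h$ is \emph{concave}; convexity of $h$ gives nothing, and the explicit second derivative above shows the conclusion is in fact false. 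The paper also dismisses the joint behavior in $(\mu,\bomega,b)$ with ``$g$ is linear in $\mu$ and $b$,'' whereas you correctly identify the coupling $\mu\log\lVert\bomega-\bx\rVert^2$ as a separate, unresolved difficulty ($\mu f(\bomega)$ is not jointly geodesically convex merely because $f$ is). In short, your plan is essentially the paper's argument made explicit, and making it explicit reveals that the key pointwise inequality does not hold; any rescue would have to exploit the truncation by $\max(0,\cdot)$ on the region where the hinge is inactive, which neither your proposal nor the paper attempts.
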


\begin{proof}
Let $l(\mu, \bomega, b; \bx, y) = \max(0, -g(\mu, \bomega, b;\bx, y))$ where $g(\mu, \bomega, b;\bx,y) = y \cdot \left(\mu\langle \bomega, \bx \rangle_{\mathbb{B}} -b\right)$. Since $\max(0, -a)$ is a convex function in $a \in \mathbb{R}$ and $g(\cdot)$ is linear in $\mu$ and $b$, we only need to show that $g(\cdot)$, as a function of $\bomega \in \mathbb{S}^{n-1}$, is geodesically convex (concave). Without
loss of generality, let $\mu = 1$ and $b=0$. Also note that $y$ is the label of data that takes values from $\{-1, +1\}$ which may flip the inequality. It suffices to validate results for the positive sample, i.e, $y = 1$. 

With a slight abuse of notation, let 
 $g(\bomega;\bx) = g(1,\bomega,0;\bx,1) = \langle \bomega , \bx \rangle_{\mathbb{B}} =  \ln \frac{1-\lVert \bx \rVert^2}{\lVert \bomega- \bx \rVert^2} $ defined on $ \bA = \left\{ \bnu \in \mathbb{S}^{n-1} \big|  \frac{\bx^T \bnu} {\lVert \bx \rVert} > 0  \right\} \subset \mathbb{S}^{n-1} $. Since $-\ln(\cdot)$ is decreasing and convex, we only need to check $h(\bomega;\bx) = \lVert \bomega- \bx \rVert^2$ is geodesically convex on $\bA$, i.e, check that the epigraph of $h$ is a convex set.  Note that $\frac{\bx^T \bomega} {\lVert \bx \rVert} = \cos(\theta_{\bomega})$ for $\bomega \in \mathbb{S}^{n-1}$ where $\theta_{\bomega} = \angle (\bomega, \bx)$. 
\begin{equation}
\begin{aligned}
        \text{epi}(h) &= \{(\bomega, c) \in \bA \times \mathbb{R} | \lVert \bomega- \bx \rVert^2 \le c\} \\
        & = \left\{(\bomega, c) \bigg| \frac{\bx^T \bomega} {\lVert \bx \rVert}  \ge \frac{1}{2\lVert \bx \rVert}(1+ \lVert \bx \rVert^2 - c) \right\}\\
         & = \left\{(\bomega, c) | \cos(\theta_{\bomega})  \ge d(c)\right\} = \begin{cases}
         \bA \times [d(c), \infty) & \text{if } d(c) \le 0\\
         \bA_d \times [d(c), \infty)
         & \text{if } d(c) > 0
         \end{cases}
\end{aligned}
\end{equation}
where $d(c)$ is a real number depending on $c$ and $\lVert \bx \rVert$ and $\bA_d = \left\{(\bomega, c) | \cos(\theta_{\bomega})  \ge d(c)\right\}$ is the collection of unit vectors where the angle between the vectors given the data $\bx$ is small i.e.,  restricted to a small region on the sphere. 
The last equality follows from the definition of $\bA$ and $\bA_d$: if $d(c) \leq 0$, then $\lbrace \omega: \cos(\theta_{\boldsymbol{\omega}})  \ge d(c) \rbrace \cap \boldsymbol{A} = \boldsymbol{A}$. Similarly, if $d(c) > 0$, $\lbrace \omega: \cos(\theta_{\boldsymbol{\omega}})  \ge d(c) \rbrace \cap \boldsymbol{A} = \lbrace \boldsymbol{\omega} \in \mathbb{S}^{n-1} |  \cos(\theta_{\boldsymbol{\omega}}) \ge d(c)  \rbrace \coloneqq\boldsymbol{A}_{d(c)}$. Both $\bA$ and $\bA_d$ are  geodesically convex sets and this completes the proof. 
\end{proof}

The convex sets $\bA, \bB$ are the hemispheres of $\mathbb{S}^{n-1}$ separated by the hyperplane $\{ \bz \in \mathbb{R}^n | \bx^{T} \bz = 0\}$ (the hyperplane has $\bx$ as its normal vector) in its ambient space $\mathbb{R}^n$. Theorem~\ref{maintheorem} tells us that given one data sample $(\bx,y)$, the optimal value described in Eq.~\eqref{optimal} exists in $\mathbb{R}^{+} \times \bA \times\mathbb{R}^{+}$, and it is globally optimal. For a collection of training samples $S = \{(\bx_i, y_i)\}_{i=1}^N$, let $\bA_i =  \left\{ \bnu \in \mathbb{S}^{n-1} \big| y_i \cdot \frac{\bx_i^T \bnu} {\lVert \bx_i \rVert} > 0  \right\}$, if the data are separable by a horosphere, it indicates that $\cap_{i=1}^N \bA_i \neq \emptyset$, then the global optimal can be obtained using any gradient-based optimization on the Riemannian manifold $\mathbb{R}^{+} \times \cap_{i=1}^N \bA_i \times\mathbb{R}^{+}$. Numerically, we apply a Riemannian gradient descent method on the entire space $\mathbb{R}^{+} \times \mathbb{S}^{n-1} \times \mathbb{R}^{+}$ since $g(\mu, \bomega, b;\bx,y)$ is continuous. 


\subsection{Horospherical SVM}
Given a horospherical decision boundary $\pi_{\mu,\bomega,b}$ parameterized by $\bomega \in \mathbb{S}^{n-1}, \mu \in \mathbb{R}^+ $, and $b \in \mathbb{R}$, the margin $\gamma$ is the minimal distance from training samples $S$ to the decision boundary:
\begin{equation}
    \gamma(\mu, \bomega, b) = \inf_{\{\bx,y\}\in S} y \cdot f(\bx; \mu, \bomega,b) \cdot d_{\mathbb{B}}(\bx, \pi_{\mu, \bomega, b} ) 
    =  \inf_{\{\bx,y\}\in S} y \cdot  \frac{(\mu\langle \bomega,\bx \rangle _{\mathbb{B}} -b )}{\mu}.
\end{equation}

The maximum margin classifier can be obtained by solving the following optimization problem:
\begin{align}\label{largemarginproblem}
    \max_{\mu, \bomega, b} \quad & \gamma(\mu, \bomega, b) \qquad
    s.t. \quad y \cdot \frac{(\mu\langle \bomega,\bx \rangle_{\mathbb{B}} -b)}{\mu} \ge \gamma
    \quad \text{for all } (\bx, y) \in S.
\end{align}

\begin{theorem}
The maximum margin classification problem in hyperbolic space with horosphere as its decision boundary described in Eq.~\eqref{largemarginproblem} is equivalent to the following optimization problem:
\begin{equation}
        \min_{\mu, \bomega, b} \quad \frac{1}{2}\mu^2 \qquad 
        s.t. \quad y \cdot (\mu\langle \bomega,\bx \rangle_{\mathbb{B}} -b) \ge 1
        \quad \text{for all } (\bx, y) \in S.
\end{equation}
\end{theorem}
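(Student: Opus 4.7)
The plan is to adapt the classical SVM rescaling trick to the horospherical setting. First I would record the scaling invariance of the problem data: since the defining equation $\mu\langle\bomega,\bz\rangle_{\mathbb{B}}-b=0$ of a horosphere is homogeneous in $(\mu,b)$, the horosphere $\pi_{\mu,\bomega,b}$ is the same geometric object as $\pi_{c\mu,\bomega,cb}$ for any $c>0$. The same homogeneity makes the signed distance $y\cdot(\mu\langle\bomega,\bx\rangle_{\mathbb{B}}-b)/\mu$ (and hence the margin $\gamma(\mu,\bomega,b)$) invariant under this joint rescaling. This is the only scaling available, because $\bomega\in\mathbb{S}^{n-1}$ cannot be rescaled as a Euclidean weight vector would be; that is the reason the reformulation acts on $(\mu,b)$ rather than on $\bomega$.

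Next I would use this one-parameter freedom to choose a canonical representative of each equivalence class of boundaries. For any feasible triple $(\mu,\bomega,b)$ of \eqref{largemarginproblem} with positive margin $\gamma>0$, pick $c=1/(\mu\gamma)$ and replace $(\mu,b)$ by $(c\mu,cb)$. Under this choice, the worst-case constraint sharpens to $\min_{(\bx,y)\in S} y\cdot(\mu\langle\bomega,\bx\rangle_{\mathbb{B}}-b)=1$, so the rescaled problem has $\gamma=1/\mu$. Maximizing $\gamma$ is therefore equivalent to minimizing $\mu$, and since $\mu>0$ this is monotonically equivalent to minimizing $\tfrac12\mu^2$; the constraint reads $y\cdot(\mu\langle\bomega,\bx\rangle_{\mathbb{B}}-b)\ge 1$ for every $(\bx,y)\in S$, which is exactly the stated formulation.

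To finish I would verify both directions of the equivalence rigorously. Forward: any optimizer of the max-margin problem, once rescaled as above, is feasible for the minimization problem and has the same underlying horosphere, so its objective $\tfrac12\mu^2$ gives an upper bound consistent with the reciprocal of the optimal margin. Backward: given any feasible $(\mu,\bomega,b)$ of the minimization problem, the induced $\gamma=1/\mu$ is achieved as a lower bound on every $y(\mu\langle\bomega,\bx\rangle_{\mathbb{B}}-b)/\mu$, so the triple is feasible for \eqref{largemarginproblem} with margin $1/\mu$; minimizing $\mu$ therefore maximizes the margin. The two optimization problems share the same optimal horosphere (up to the irrelevant $(\mu,b)$ rescaling) and the optimal values are reciprocals of one another.

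The only subtle point — and the one I would watch carefully — is handling the degenerate case $\gamma=0$, where the rescaling $c=1/(\mu\gamma)$ is undefined; this is excluded by the assumption that the data are separable by a horosphere (the setting in which both problems are of interest). Beyond that, the argument is essentially bookkeeping about the homogeneity of the constraints, with no heavy computation required.
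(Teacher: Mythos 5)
Your proposal is correct and follows essentially the same route as the paper, which simply invokes the classical Euclidean SVM rescaling argument and notes that the margin is invariant under a joint positive rescaling of $(\mu,b)$. You supply the details the paper omits (including the precise normalizing constant $c=1/(\mu\gamma)$ that makes the tightest constraint equal to $1$ and yields $\gamma=1/\mu$), so your write-up is a faithful, slightly more explicit version of the intended proof.
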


The proof is analogous to that in the Euclidean case \cite{bishop2006pattern}. Note that the margin is unchanged if we apply the following scale transformation: $\mu \rightarrow \mu/ \gamma$ and $b \rightarrow b/\gamma$. We can also build a soft-margin horospherical SVM, dubbed HoroSVM, by minimizing the following loss function:
\begin{equation}\label{softmargin}
    l(\mu,\bomega,b; \bx, y) = \frac{1}{2}\mu^2 + C \sum\limits_{i=1}^{\lvert S\rvert} \max (0, 1- y_i \cdot (\mu\langle \bomega,\bx_i \rangle_{\mathbb{B}} -b)).
\end{equation}
where $C$ is a hyperparameter that controls the tradeoff between minimizing misclassification and maximizing margin.

It is easy to see that the same result as in Theorem~\ref{maintheorem} holds for the loss function in Eq.~\eqref{softmargin}. That is, the loss function is a geodesically convex function on one geodesically convex subset of the parameter space and a geodesically concave function on the other geodesically convex subset of the parameter space. Recall that the idea behind proving that the loss function in the horospherical perceptron, $\max(0, -g(\cdot))$, where $g(\cdot) = y \cdot (\mu\langle \bomega,\bx\rangle_{\mathbb{B}} -b)$, is geodesically convex is based on the important fact that $\max(0,-a)$ is a convex function in $a \in \mathbb{R}$. Similarly, the same idea applies to HoroSVM, where the hinge loss is used, and the loss function becomes $\frac{1}{2} \mu^2 + \max(0, 1-g(\cdot))$. Note that $\max(0, 1-a)$ is also a convex function in $a \in \mathbb{R}$ and $\frac{1}{2} \mu^2$ is a convex function in $\mu$, these facts complete the proof of the desired property for the HoroSVM, which is analogous to Theorem~\ref{maintheorem} for horospherical perceptron.



We can then apply any Riemannian gradient descent optimization methods for updating the parameters in HoroSVM since the problem is an optimization problem over a product space of Riemannian manifolds, $\mathbb{R}^{+} \times \mathbb{S}^{n-1} \times \mathbb{R}^{+}$. We refer the readers to \cite{boumal2020introduction} and \cite{absil2009optimization} for more details about optimization techniques on Riemannian manifolds.

\section{Experiments}\label{sec:experimens}
In this section, we present several experimental results obtained from an application of our HoroSVM to synthetic data as well as real data sets used in published literature. Our implementation is based on Pymanopt \cite{koep2016pymanopt} using the Riemannian conjugate
gradient method \cite{sato2022riemannian} on Intel(R) Xeon(R) CPU E5-2683 v3 @ 2.00GHz.

\subsection{Network Data Set}

Here, we follow the experimental setup in \cite{cho2019large}, 
and evaluate our HoroSVM over four real-world network data sets used by \cite{chamberlain2017neural}: \texttt{karate} \cite{zachary1977information} (2 classes, 34 nodes ), \texttt{polblogs} \cite{adamic2005political} (2 classes, 1224 nodes ), \texttt{polbooks}
\footnote{\url{http://www-personal.umich.edu/~mejn/netdata/}} 
(3 classes, 105 nodes ), and \texttt{football} \cite{girvan2002community} (12 classes, 115 nodes ). 

The network data is embedded in a 2D hyperbolic space. Given the hyperbolic embeddings, we compare our HoroSVM with three other competing  large margin classifiers: Euclidean SVM (even though it violates the hyperbolic geometry), hyperboloid SVM \cite{cho2019large}, and Poincar\'{e} SVM 
 \cite{chien2021highly}. A one-verses-rest strategy is applied for multiclass classification. We conducted a five-fold cross-validation on each data set, where we chose the hyperparameter $C$ from $\{1, 5, 10\}$ during the cross-validation procedure. The mean of the F1 score followed by the standard deviation over five trials are summarized in Table~\ref{small-dataset}. 
 As evident from the table, our method yields the best results on all the data sets.  
 
HoroSVM outperformed other methods on all four data sets. The data in \texttt{karate} are well-separated and thus both Euclidean SVM and hyperboloid SVM performed equally well. 
Our method outperforms the others since the horospheres have several nice properties, the most important of which is that the Busemann function whose level sets are the horospheres is a convex function that guarantees global optimality in the optimization.
Notice that the performance of Poincar\'{e} SVM on \texttt{karate} is inferior to others by a significant amount. The reason is that the performance of Poincar\'{e} SVM is sensitive to the choice of the reference point. We demonstrate our performance gain on the remaining data sets, and our method is more consistent, compared to Euclidean SVM and hyperboloid SVM, in terms of lower standard deviation, on \texttt{football} data set where data exhibit a larger variance/spread.

\begin{table}
\caption{F1 scores for node classification on network datasets. 
Boldface indicates best performance.}
\label{small-dataset}
\begin{center}
\begin{tabular}{lcccc}
\toprule
Methods         & Karate                   & Polblogs                 & Polbooks                 & Football                 \\ \midrule
Euclidean SVM   & 0.95 $\pm$ 0.06          & 0.92 $\pm$ 0.02          & 0.83 $\pm$ 0.03          & 0.29 $\pm$ 0.12          \\
Hyperboloid SVM & 0.95 $\pm$ 0.06          & 0.92 $\pm$ 0.01          & 0.83 $\pm$ 0.03          & 0.30 $\pm$ 0.14          \\
Poincar\'{e} SVM    & 0.78 $\pm$ 0.16          & 0.92 $\pm$ 0.02          & 0.84 $\pm$ 0.03          & 0.32 $\pm$ 0.04          \\
HoroSVM (Ours)        & \textbf{0.98 $\pm$ 0.04} & \textbf{0.93 $\pm$ 0.01} & \textbf{0.85 $\pm$ 0.04} & \textbf{0.34 $\pm$ 0.06} \\ \bottomrule
\end{tabular}
\end{center}
\end{table}

\begin{table}
\caption{F1 scores for subtree classification on four subtrees of WordNet. 
Boldface indicates best performance on 2D embeddings of each dataset.
 }
\label{wordnet-dataset}
\begin{center}
\begin{tabular}{lcccc}
\toprule
\multirow{2}{*}{Methods} & animal.n.01 &  group.n.01 & worker.n.01 &  mammal.n.01 \\
& 3218/798 & 6649/1727 & 861/254 & 953/228
\\\midrule

Hyperboloid SVM (D = 2) & 0.53 $\pm$ 0.07         & 0.52 $\pm$ 0.01         & 0.54 $\pm$ 0.04         & 0.39 $\pm$ 0.03       \\ \midrule
Hyperbolic LR (D = 2)        & 0.46 $\pm$ 0.08 & 0.52 $\pm$ 0.04 & 0.54 $\pm$ 0.07 & 0.32 $\pm$ 0.10 \\ 
Hyperbolic LR (D = 5)        & 0.95 $\pm$ 0.03 & 0.76 $\pm$ 0.07 & 0.80 $\pm$ 0.08 & 0.78 $\pm$ 0.04 \\
Hyperbolic LR (D = 10)       & 0.96 $\pm$ 0.01 & 0.86 $\pm$ 0.05 &0.84 $\pm$ 0.04 & 0.94 $\pm$ 0.04 \\\midrule
Euclidean SVM (D = 2)        & 0.39 $\pm$ 0.01 & 0.39 $\pm$ 0.00 & 0.32 $\pm$ 0.02 & 0.20 $\pm$ 0.01 \\ 
Euclidean SVM (D = 5)        & 0.95 $\pm$ 0.00 & 0.79 $\pm$ 0.01 & 0.38 $\pm$ 0.02 & 0.44 $\pm$ 0.01 \\
Euclidean SVM (D = 10)       & 0.97 $\pm$ 0.00 & 0.91 $\pm$ 0.00 & 0.46 $\pm$ 0.04 & 0.72 $\pm$ 0.05 \\\midrule
HoroSVM (D = 2)        & \textbf{0.57 $\pm$ 0.07} & \textbf{0.65 $\pm$ 0.01} & \textbf{0.62 $\pm$ 0.01} & \textbf{0.42 $\pm$ 0.01} \\ 
HoroSVM (D = 5)        & 0.93 $\pm$ 0.01 & 0.88 $\pm$ 0.00 & 0.82 $\pm$ 0.04 & 0.88 $\pm$ 0.01 \\
HoroSVM (D = 10)       & 0.95 $\pm$ 0.02 & 0.91 $\pm$ 0.01 & 0.86 $\pm$ 0.01 & 0.93 $\pm$ 0.02 \\
\bottomrule
\end{tabular}
\end{center}
\end{table}

\subsection{Subtree Classification in WordNet}

A task of considerable interest in hyperbolic space classification problems is to determine whether a node belongs to a given subtree in the hyperbolic embedding. We obtained hyperbolic embeddings in various dimensions using the approach in \cite{ganea2018hyperbolic} for 
WordNet\cite{Princeton}
noun hierarchy (82,115 nodes). We consider four subtrees whose roots are the following synsets: \texttt{ANIMAL.N.01}, \texttt{GROUP.N.01}, \texttt{WORKER.N.01}, and \texttt{MAMMAL.N.01}. 

We split all nodes in a subtree into positive training (80\%) and test (20\%) nodes and applied the same process to the remaining WordNet nodes to create negative training and test sets. The average F1 scores and the standard deviations over 3 trials are shown in Table~\ref{wordnet-dataset}. The number of positive training/test samples of each data set are listed as well. We exclude Poincar\'{e} SVM from the comparisons in this task. The reason being, data are highly imbalanced in this task and the positive samples are clustered near the boundary. The reference point learned in Poincar\'{e} SVM will be close to the boundary where the tangent approximation of data at this reference point is highly distorted, as opposed to the original hyperbolic embeddings. It is therefore hard to locate a hyperplane in the tangent space that separates the lifted (mapped) data. In addition, the learning of the reference point is only applicable to 2D hyperbolic space.  

As well known in the Euclidean SVM literature, a vanilla (unweighted) implementation of SVM performs poorly on extremely imbalanced data, we observed the same behavior in training  HoroSVM on this task. We preprocessed the data by downsampling the majority class of samples (the negative samples) to train a robust model. Since there is no protocol for dealing with unbalanced data in training a hyperboloid SVM, we presented the Euclidean SVM results using the same preprocessed data for reference. In addition, we presented results of hyperbolic logistic regression (LR) \cite{ganea2018hyperbolic}, which is not a large-margin classifier on this task, where the imbalanced data is handled by sampling the equal number of negative and positive nodes in each mini-batch of size 16 during training.

Now, we highlight several results in Table~\ref{wordnet-dataset}. The superior performance of hyperboloid SVM over hyperbolic LR is expected, as both methods use geodesic decision boundaries but hyperboloid SVM aims to maximize the margin. However, the training of hyperboloid SVM is highly unstable as we mentioned earlier due to the non-convex optimization process. Euclidean SVM under-performs as it does not take into account the hyperbolic geometry. Our HoroSVM exhibits a significant improvement in predicting words in a subtree, as evidenced by higher F1 scores across all the subtrees. The small number of nodes within a subtree, compared to the whole WordNet, causes the nodes to cluster near the boundary in their hyperbolic embeddings.  Thus, horosphere is an ideally suited decision boundary (in comparison to the geodesic boundary in \cite{cho2019large}) to isolate the subtree.
\subsection{Synthetic Data with Noisy Labels}


\begin{wrapfigure}{r}{0.55\columnwidth}
\vspace{-45pt}
\centering
\subfigure{
\centering
\includegraphics[width = 0.45\linewidth]{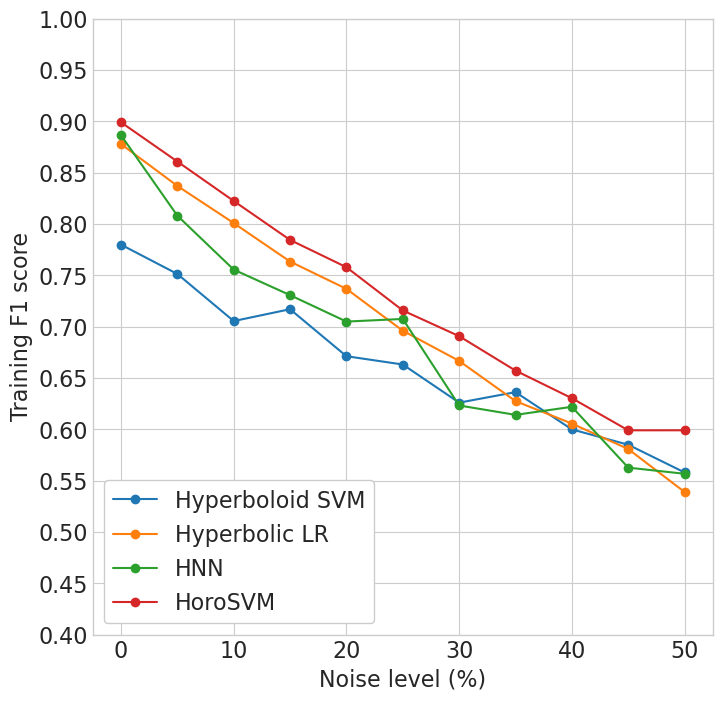}
}
\subfigure{
\centering
\includegraphics[width = 0.45\linewidth]{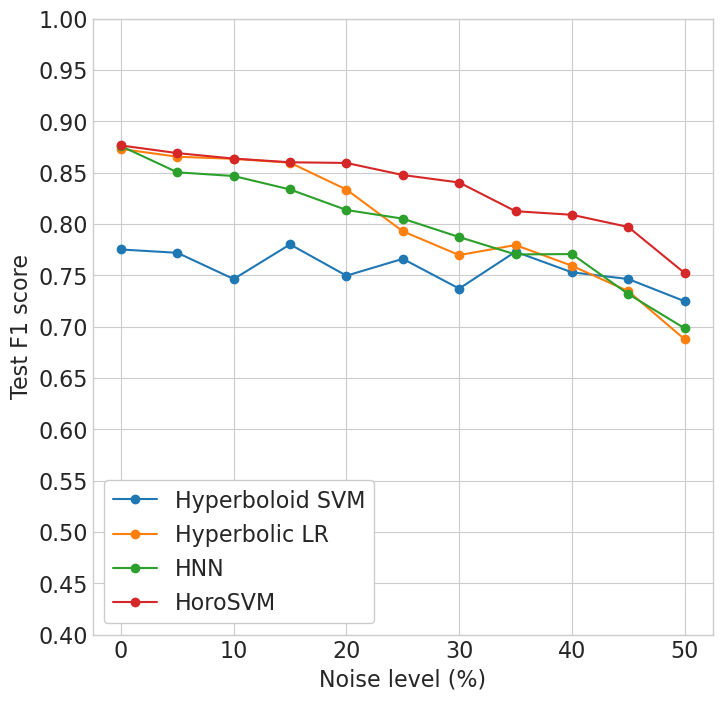}
}
\caption{Training (left) and test (right) F1 scores of several methods on synthetic data with noisy labels at different noise levels.}
\label{fig:synthetic}
\vspace{-5pt}
\end{wrapfigure}

To demonstrate the robustness of our HoroSVM, we apply it to synthetic data with noisy labels at varying levels/amounts of noise. Specifically, we generated 100 synthetic datasets by sampling from a Gaussian mixture model defined on the Poincar\'{e} disk model as in \cite{cho2019large}. The isotropic Gaussian distribution in hyperbolic space is referred to as the \emph{Riemannian normal} distribution, and we used the sampling method presented in \cite{mathieu2019charline}. For each dataset, we sampled two centroids from a zero-mean Riemannian normal distribution with a variance of 1.5. We then sampled 200 data points from a unit-variate Riemannian normal distribution centered at each centroid, resulting in a dataset of 400 points classified into positive and negative classes. We split the dataset into training and test sets with 100 positive/negative samples in the training set and 100 positive/negative samples in the test set. We then generated datasets with noisy labels at noise levels: $\eta \in \{0, 0.05, 0.1, \dots, 0.5\}$ by flipping the labels of a proportion $\eta$ of the training (not test) samples, with an equal number of positive and negative samples flipped. The train/test average F1 scores of each method across all datasets at varying noise levels are shown in Fig ~\ref{fig:synthetic}. We compared our HoroSVM with hyperboloid SVM, hyperbolic LR, and a two-layer HNN \cite{ganea2018hyperbolic} (with a hidden dimension of 5). While all methods depict decreasing training F1 scores as the noise level increases, HoroSVM outperforms the others consistently throughout the training process. Hyperbolic LR and HNN exhibit the least resistance to label noise, with test F1 scores dropping (faster) with increasing noise level. Both hyperboloid SVM and HoroSVM demonstrate consistent performance across different noise levels, owing to the inherent robustness of large-margin classifiers. However,
the training of hyperboloid SVM is highly unstable resulting its inferior performance. HoroSVM demonstrate its superiority in accuracy and robustness as evidenced in the results.
\section{Discussion and Conclusions}\label{conc}

In this paper, we presented a novel large margin classifier, dubbed HoroSVM, whose decision boundaries are  horospheres that are the level sets of a Busemann function. We presented a novel formulation leading to the optimization of a geodesically convex loss performed using a Riemannian gradient-based method and guaranteeing a globally optimal solution. We demonstrated superior to competitive performance of the HoroSVM over SOTA large margin classifiers. 

In Euclidean space, a kernel SVM is usually favored over the linear SVM due to its ability to cope with non-linearly separable data. In Hyperbolic space, the challenge lies in developing valid positive definite kernels (see \cite{feragen2015geodesic} for details on  validity of kernels on Riemannian manifolds). The only reported work on KSVM in hyperbolic space that we are aware of is \cite{cho2019large}, which uses a kernel that violates the positive definiteness property of RKHS kernels. 
Thus the problem of interest is primarily defining a valid family of kernels in hyperbolic space. We will address this in our future work.

\section*{Acknowledgements}

This research was in part funded by the NSF grant IIS 1724174 and the NIH NINDS and NIA grant RF1NS121099 to Vemuri.


{\small
\bibliographystyle{plain}
\bibliography{reference}
}

\end{document}